\renewcommand{\phi}{\varphi}
\newtheorem{definition}{Definition}
\newtheorem{theorem}{Theorem}
\newtheorem{lemma}{Lemma}
\newtheorem{claim}{Claim}
\newenvironment{proof}{\noindent{\sf Proof.}}{\hfill $\boxtimes\hspace{2mm}$\linebreak}
\newcommand{\qed}{\hfill $\boxtimes\hspace{1mm}$}
\renewcommand{\phi}{\varphi}
\renewcommand{\epsilon}{\varepsilon}
\newcommand{\N}{{\sf N}}
\newcommand{\B}{{\sf B}}
\newcommand{\cN}{{\sf \overline{N}}}
\renewcommand{\S}{{\sf S}}
\newenvironment{proof-of-claim}{\noindent{\sc Proof of Claim.}}{\hfill $\boxtimes\hspace{2mm}$\linebreak}
\title{Blameworthiness and Sacrifice}
\title{Moral Blameworthiness and Sacrifice}
\title{The Limits of Morality in Strategic Games}
\author{Agent Theories and Models; Action, Change and Causality}
\begin{document}

\begin{CJK}{UTF8}{gkai}


\author{
Rui Cao$^1$\and
Pavel Naumov$^2$
\affiliations
$^1$University of British Columbia, Canada\\
$^2$Claremont McKenna College, USA\\
\emails
rui.cao@alumni.ubc.ca,
pgn2@cornell.edu
}

\maketitle

\begin{abstract}
A coalition is blameable for an outcome if the coalition had a strategy to prevent it. It has been previously suggested that the cost of prevention, or the cost of sacrifice, can be used to measure the degree of blameworthiness. The paper adopts this approach and proposes a modal logical system for reasoning about the degree of blameworthiness. The main technical result is a completeness theorem for the proposed system.
\end{abstract}


\section{Introduction}

On October 13th, 2011, a two-year-old girl was run over by a van in the city of Foshan in Guangdong Province, China. 18 people passed by and ignored her before a stranger moved the girl to the side of the road and notified her mom~\cite{zzx11chinese,bristow11bbc}. The girl, nicknamed ``Little Yueyue'' by Chinese media, later died in hospital~\cite{bbc17bbc}. This tragic incident is not unique. On April 21, 2017, a woman was crossing a street in the city of Zhumadian in the Chinese Province of Henan. She was struck by a taxi and laid on the road until an SUV run over her, neither drivers nor pedestrian stopped to help her. She later died from injuries~\cite{xhr17chinese,j17nytimes}. Although many Chinese were outraged by these events and blamed the passers-by for not helping, others did not. Trying to understand the people who does not blame passers-by, the New York Times quotes a post on Weibo, a Chinese social network: ``If I helped her to get up and sent her to the hospital, doctors would ask you to pay the medical bill. Her relatives would come and beat you up indiscriminately.''~\cite{j17nytimes}. This explanation refers to an even earlier event. On November 20, 2006, in the city of Nanjing, Jiangsu Province, an old woman fell down on a bus stop. A young man helped her to get up, escorted her to a hospital, and stayed there until she was examined by a doctor. He was later sued by the woman and her relatives. The court eventually ordered him to pay 40 percent of the medical costs. The verdict said that ``according to common sense'' it is highly likely that the man is responsible for the woman's fall because otherwise he ``would have left soon after sending the woman to the hospital instead of staying there for the surgical check"~\cite{s11chinadaily}. New York Times cites Dali L. Yang, a political scientist at the University of Chicago, saying  “In the aftermath of the Nanjing case, many Chinese worry about the victims turning around to blame the helpers, and thus feel unable to offer direct help.''~\cite{j17nytimes}. 

The cited above post on Weibo essentially says that the people should not be blamed for not doing something that would require them to sacrifice a lot. A similar position is argued by Shelly Kagan in his book {\em The Limits of Morality}, where he suggests that sacrifice could be one of possible ways to measure the degree of moral blameworthiness: the higher sacrifice is required to prevent something the less a person (or a group of people) should be blamed for not doing it~\cite{kagan91}.
Nelkin argues that  the sacrifice can be used as a degree of blameworthiness and praiseworthiness~[\citeyear{nelkin16nous}]. \cite{hk18aaai} suggested to measure sacrifice by its {\em cost}. They also noted that there are other ways to define degree of blameworthiness; for example, through probability with which the harm could have been prevented. 

The dilemma of balancing costs and moral responsibilities is being faced not only by passers-by in China. Insurance companies need to choose between refusing to pay for very expensive drug and potentially saving life of a patient. Car engineers choose between better safety equipment and its higher cost. Governments have to balance public support of life-saving   medical research with its cost.

In this paper we propose a sound and complete logical system that describes  modality $\B^s_C\phi$ meaning ``coalition $C$ is blameable for $\phi$ with degree $s$'', where the degree is defined as the cost of the sacrifice the coalition would have made to prevent $\phi$. Our work heavily builds on \cite{alnr11jlc} logic for resource bounded coalitions and \cite{nt19aaai} logic of blameworthiness in strategic games. In turn, both of these papers rooted in widely studied~\cite{vw05ai,b07ijcai,sgvw06aamas,abvs10jal,avw09ai,b14sr,gjt13jaamas,ge18aamas} Marc Pauly's logic of coalition power~\cite{p02}. Pauly gave a complete axiomatization of modality $\S_C\phi$ that stands for ``coalition $C$ has a strategy to achieve $\phi$''.  \cite{alnr11jlc} did the same for modality $\S^r_C\phi$ meaning ``coalition $C$ has a strategy to achieve $\phi$ using resources $r$''. \cite{nt19aaai} proposed a complete axiomatization of modality $\B_C\phi$, ``coalition $C$ is blamable for outcome $\phi$''. They used a common approach of defining blamable as ``statement $\phi$ is true but coalition $C$ had a strategy to prevent it''. This approach is known as the principle of alternative possibilities~\cite{f69tjop,w17} or counterfactual~\cite{c15cop} definition of blameworthiness. It is also a part of Halpern-Pearl formal definition of causality as a relation between sets of variables~\cite{h16}. Counterfactuals could also be used to define regret and several other emotions~\cite{ls11ai}. Although the principle of alternative possibilities is the most common way to define blameworthiness, other approaches has been explored too.  \cite{x98jpl} introduced a complete axiomatization of a modal logical system for reasoning about responsibility defined as taking actions that guarantee a certain outcome. \cite{bht09jancl} extended Xu's work from individual responsibility to group responsibility.

In this paper we propose a logic of blameworthiness with sacrifice  that describes universal property of modality $\B^s_C$, meaning ``statement $\phi$ is true but coalition $C$ had a strategy to prevent it at cost no more than $s$''. Our main technical results are the soundness and the completeness theorems for this logical system.

This paper is organized as follows. In the next section we define the formal syntax and semantics of our logical system. In Section~\ref{axioms section} we list and discuss its axioms. In Section~\ref{soundness section} and Section~\ref{completeness section} we prove, respectively, the soundness and the completeness of our system. Section~\ref{conclusion section} concludes. 

\section{Syntax and Semantics}\label{syntax and semantics section}

In this paper we assume a fixed set $\mathcal{A}$ of agents and a fixed set of propositional variables $\sf Prop$. By a coalition we mean an arbitrary subset of set $\mathcal{A}$.

\begin{definition}\label{Phi}
$\Phi$ is the minimal set of formulae such that
\begin{enumerate}
    \item $p\in\Phi$ for each variable $p\in {\sf Prop}$,
    \item $\phi\to\psi,\neg\phi\in\Phi$ for all formulae $\phi,\psi\in\Phi$,
    \item $\N\phi$, $\B^s_C\phi\in\Phi$ for each coalition $C\subseteq\mathcal{A}$, each real number $s\ge 0$, and each formula $\phi\in\Phi$. 
\end{enumerate}
\end{definition}
In other words, language $\Phi$ is defined by  grammar:
$$
\phi := p\;|\;\neg\phi\;|\;\phi\to\phi\;|\;\N\phi\;|\;\B^s_C\phi.
$$
Informally, modality $\B_C^s\phi$ means ``coalition $C$ is blameable for statement $\phi$ with degree $s$''. The other modality in our system is $\N\phi$, which stands for ``statement $\phi$ is universally true in the given game''. By $\cN\phi$ we mean formula $\neg\N\neg\phi$. We assume that conjunction $\wedge$ and disjunction $\vee$ are defined in the standard way.

By $X^Y$ we mean the set of all functions from set $Y$ to $X$.
\begin{definition}\label{game definition}
A game is a tuple $\left(\Delta,\|\cdot\|,d_0,\Omega,P,\pi\right)$, where 
\begin{enumerate}
    \item $\Delta$ is a nonempty set of ``actions",
    \item $\Omega$ is a set of ``outcomes",
    \item $\|d\|$ is a non-negative real number for each $d\in \Delta$, called the cost of action $d$,
    \item $d_0\in \Delta$ is a zero-cost action: $\|d_0\|=0$,
    \item a set of ``plays" $P$ is an arbitrary set of pairs $(\delta,\omega)$ such that $\omega\in\Omega$ is an outcome and  $\delta\in\Delta^\mathcal{A}$ is a function from set $\mathcal{A}$ to set $\Delta$, called ``complete action profile'',
    \item $\pi$ is a function that maps $\sf Prop$ into subsets of $P$.
\end{enumerate}
\end{definition} 
For example, in the case of Little Yueyue from the introduction, each of the passers-by had two available actions: to help or to ignore. Thus, $\Delta=\{\mbox{help},\mbox{ignore}\}$. For the purpose of this example, we assume that any passer-by volunteering to help, would have to pay \yen 1000 towards Little Yueyue's medical bill: $\|\mbox{help}\|=1000$. We also assume that ignoring is a zero-cost option:  $\|\mbox{ignore}\|=0$. There are two possible outcomes: either Yueyue stays alive or dies. Thus, $\Omega=\{\mbox{alive},\mbox{dead}\}$.  The set of plays $P$ describes all possible combinations of actions and outcomes in the game. In our case, these combinations are listed as separate lines of the table in Figure~\ref{plays figure}. Although there have been 19 passers-by mentioned in Little Yueyue tragic story (18 of who decided to ignore and one who decided to help), to keep our example simple, we assume that there were only three agents: $a_1$, $a_2$, and $a_3$. If any of the first two of them decided to help, Little Yueyue would be alive. We assume that the third agent arrived too late to save her life, see Figure~\ref{plays figure}.

\begin{figure}[ht]
\begin{center}
\begin{tabular}{ l l l | c}
 $\delta(a_1)$ & $\delta(a_2)$ & $\delta(a_3)$ & $\omega$\\ \hline
ignore & ignore & ignore & dead \\ 
ignore & ignore & help & dead \\ 
ignore & help & ignore & alive \\ 
ignore & help & help & alive \\ 
help & ignore & ignore & alive \\ 
help & ignore & help & alive \\ 
help & help & ignore & alive \\ 
help & help & help & alive 
\end{tabular}
\end{center}
    \caption{Set of plays $P$.}
    \label{plays figure}
\end{figure}

Note that $\pi$ maps propositional variables not into set of outcomes, but into sets of plays. This is because we allow statements represented by atomic propositions to refer not only to outcome, but to actions as well. An example of such statement in our case is ``if agent $a_2$ helps, then Little Yueyue stays alive''. 

One can imagine a fixed ``tax'' added to costs of all actions in the game. Such uniform tax constitutes fixed overhead costs and should not be used to measure the sacrifice. To avoid this situation in Definition~\ref{sat} we assume existence of zero-cost action $d_0$. This assumption is significant because without it the Monotonicity axiom of our system would not be valid. A similar assumption is made in the logic for resource bounded coalitions  \cite{alnr11jlc}. 
 
\begin{definition}\label{cost of profile definition}
For any action profile $\gamma\in\Delta^\mathcal{C}$ of a coalition $C$ by $\|\gamma\|$ we mean the total cost of the action profile to the coalition: $\|\gamma\|=\sum_{a\in C}\|a\|$.
\end{definition}  
For any functions $f$ and $g$, we write $f=_Xg$, if $f(x)=g(x)$ for each $x\in X$.

\begin{definition}\label{sat} 
For any formula $\phi$ and any play $(\delta,\omega)\in P$ of a game $\left(\Delta,\|\cdot\|,d_0,\Omega,P,\pi\right)$, the satisfiability relation $(\delta,\omega)\Vdash\phi$ is defined recursively as follows:
\begin{enumerate}
    \item $(\delta,\omega)\Vdash p$ if $(\delta,\omega)\in \pi(p)$, where $p\in {\sf Prop}$,
    \item $(\delta,\omega)\Vdash \neg\phi$ if $(\delta,\omega)\nVdash \phi$,
    \item $(\delta,\omega)\Vdash\phi\to\psi$ if $(\delta,\omega)\nVdash\phi$ or $(\delta,\omega)\Vdash\psi$,
    \item $(\delta,\omega)\Vdash\N\phi$ if $(\delta',\omega')\Vdash\phi$ for each play $(\delta',\omega')\in P$,
    \item $(\delta,\omega)\Vdash\B^s_C\phi$ if $(\delta,\omega)\Vdash\phi$ and there is $\gamma\in \Delta^C$ such that $\|\gamma\|\le s$ and for each play $(\delta',\omega')\in P$, if $\gamma=_C\delta'$, then $(\delta',\omega')\nVdash\phi$.
\end{enumerate}
\end{definition}

Note that item 5 of Definition~\ref{sat} takes into account potential costs under action profile $\delta'=_C\gamma$ to coalition $C$ and ignores the actual costs under the action profile $\delta$. We refer to this way of defining the sacrifice as the {\em absolute sacrifice}. Alternatively, by the {\em relative sacrifice} we mean the difference between the costs under these two profiles or {\em how much more} it would cost the coalition to prevent undesired outcome comparing to the current costs. 

The use of the absolute sacrifice as a degree of blameworthiness makes sense in many situations. For example, let us assume that in the Little Yueyue example agent $a_1$ was heading to a medical supply store to buy a new \yen 1000 wheelchair for his ill child. The agent is now facing a moral choice between (a) zero-cost option of doing nothing, (b) spending  \yen 1000 on a new wheelchair, and (c) spending  \yen 1000 on a medical bill. If the agent were to choose to help instead of buying the wheelchair, his relative sacrifice would be zero. In this case, the absolute sacrifice of \yen 1000 is probably a better measure of the degree of blameworthiness. At the same time, relative sacrifice makes sense in situation like blame for the result of corner-cutting in safety, when a small additional expense could prevent a tragic incident. \cite{hk18aaai} use relative sacrifice as their measure of degree of blameworthiness.  

\section{Axioms}\label{axioms section}

In addition to the propositional tautologies in  language $\Phi$, our logical system contains the following axioms:

\begin{enumerate}
    \item Truth: $\N\phi\to\phi$ and $\B^s_C\phi\to\phi$,
    \item Distributivity: $\N(\phi\to\psi)\to(\N\phi\to \N\psi)$,
    \item Negative Introspection: $\neg\N\phi\to\N\neg\N\phi$,
    \item None to Blame: $\neg\B^s_\varnothing\phi$,
    \item Monotonicity: $\B^s_C\phi\to\B^t_D\phi$, where $C\subseteq D$ and $s\le t$,
    \item Joint Responsibility: if $C\cap D=\varnothing$, then\\ $\cN\B^s_C\phi\wedge\cN\B^t_D\psi\to (\phi\vee\psi\to\B^{s+t}_{C\cup D}(\phi\vee\psi))$,
    \item Blame for Cause:\\ $\N(\phi\to\psi)\to(\B^s_C\psi\to(\phi\to \B^s_C\phi))$,
    
    \item Fairness: $\B^s_C\phi\to\N(\phi\to\B^s_C\phi)$.
\end{enumerate}

These axioms are the same as the axioms of the logic of blameworthiness~\cite{nt19aaai} except for the sacrifice superscript being added. The Truth, the Distributivity, and the Negative Introspection axioms for modality $\N$ capture the fact that this is an S5-modality. The Truth axiom for modality $\B$ states that a coalition can only be blamed for something which is true. The None to Blame axiom says that the empty coalition can not be blamed for anything.

The Monotonicity axiom states that if a smaller coalition $C$ can be blamed for not preventing an outcome at cost at most $s$, then any larger coalition $D$ can also be blamed for not preventing the outcome at cost at most $t$, where $t\ge s$. This axiom is valid because each agent in set $D\setminus C$ could use the zero-cost action. One may question our underlying assumption that a larger coalition should be blamed for wrongdoings of its part. This assumption is consistent, for example, with how the entire millennial generation is blamed in the media for decline in sales of beer, paper napkins, and motorcycles~\cite{s18foxnews}.

The Joint Responsibility axiom shows how blames of two disjoint coalitions can be combined into a blame of their union. It resembles the Cooperation axiom for resource-bounded coalitions~\cite{alnr11jlc}: if $C\cap D=\varnothing$, then
$
\S^p_C(\phi\to\psi)\to(\S^q_D\phi\to \S^{p+q}_{C\cup D}\psi)
$.

To understand the Blame for Cause axiom note that formula $\N(\phi\to\psi)$ means that $\phi$ implies $\psi$ for each play of the game. In this case we say that $\phi$ is a {\em cause} of $\psi$. The axiom says that if a coalition is responsible for a statement, then it is also responsible for its cause as long as the cause is true.


The Fairness axiom states that if a coalition is blamed for $\phi$, then it should be blamed for $\phi$ each time when $\phi$ is true.

We write $\vdash\phi$ if formula $\phi$ is provable from the axioms of our system using the Modus Ponens and
the Necessitation inference rules:
$$
\dfrac{\phi,\phi\to\psi}{\psi},
\hspace{20mm}
\dfrac{\phi}{\N\phi}.
$$

The next lemma generalizes the Joint Responsibility axiom from two to multiple coalitions. Its proof is identical to the proof of the corresponding result in~\cite[Lemma 5]{nt19aaai} with the superscript added.
\begin{lemma}\label{super joint responsibility lemma}
For any integer $n\ge 0$,  
$$
\{\cN\B^{t_i}_{D_i}\chi_i\}_{i=1}^n,\chi_1\vee\dots\vee\chi_n
\vdash \B^{t_1+\dots+t_n}_{D_1\cup\dots\cup D_n}(\chi_1\vee \dots\vee\chi_n),
$$
where  sets $D_1,\dots,D_n$ are  pairwise disjoint.\qed
\end{lemma}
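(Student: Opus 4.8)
The plan is to prove the lemma by reducing it to an auxiliary ``possibility'' form that admits a clean induction on $n$. Concretely, I would first establish, for all pairwise disjoint $D_1,\dots,D_n$ with $n\ge 1$, the intermediate claim
$$\{\cN\B^{t_i}_{D_i}\chi_i\}_{i=1}^n \vdash \cN\B^{t_1+\dots+t_n}_{D_1\cup\dots\cup D_n}(\chi_1\vee\dots\vee\chi_n).$$
The base case $n=1$ is trivial. For the step from $n$ to $n+1$, I would invoke the Joint Responsibility axiom with the coalition $D_1\cup\dots\cup D_n$ (blamed for $\chi_1\vee\dots\vee\chi_n$ at cost $t_1+\dots+t_n$, available by the induction hypothesis) and the coalition $D_{n+1}$ (blamed for $\chi_{n+1}$ at cost $t_{n+1}$); these are disjoint because $D_{n+1}$ is disjoint from each $D_i$ with $i\le n$. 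That axiom yields only $(\chi_1\vee\dots\vee\chi_{n+1})\to\B^{t_1+\dots+t_{n+1}}_{D_1\cup\dots\cup D_{n+1}}(\chi_1\vee\dots\vee\chi_{n+1})$, so to recover the desired $\cN$-formula I would (i) apply the Truth axiom to the induction hypothesis to obtain $\cN(\chi_1\vee\dots\vee\chi_n)$, hence $\cN(\chi_1\vee\dots\vee\chi_{n+1})$; (ii) use the Negative Introspection axiom to prefix each $\cN\B$-hypothesis with $\N$, so the implication above holds under $\N$; and (iii) combine the two via the standard S5 principle $(\N(\alpha\to\beta)\wedge\cN\alpha)\to\cN\beta$.

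Granting the claim, the lemma follows in one further step. Taking $\phi=\chi_1\vee\dots\vee\chi_n$, $C=D_1\cup\dots\cup D_n$, and $s=t_1+\dots+t_n$, the Fairness axiom $\B^s_C\phi\to\N(\phi\to\B^s_C\phi)$, together with contraposition, the Necessitation rule, the Distributivity axiom and the Negative Introspection axiom, gives $\cN\B^s_C\phi\to\N(\phi\to\B^s_C\phi)$; the Truth axiom then produces $\phi\to\B^s_C\phi$, and Modus Ponens against the premise $\chi_1\vee\dots\vee\chi_n$ yields $\B^s_C\phi$, as required. The degenerate case $n=0$ is immediate under the usual convention that the empty disjunction is $\bot$.

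The argument is largely modal bookkeeping with $\N$ and $\cN$, so there is no single hard obstacle; the part needing the most care is the passage inside the induction from the ``actual'' conclusion of the Joint Responsibility axiom to its ``possibility'' form, which hinges on every $\cN\B$-hypothesis being $\N$-stable (Negative Introspection) and on picking the correct direction of the $\N/\cN$ duality. To streamline the write-up I would first isolate as standalone observations the derived principles $\cN\B^s_C\phi\wedge\cN\B^t_D\psi\to\cN\B^{s+t}_{C\cup D}(\phi\vee\psi)$ for $C\cap D=\varnothing$, and $\cN\B^s_C\phi,\,\phi\vdash\B^s_C\phi$; with these in place both the induction and the final step become routine.
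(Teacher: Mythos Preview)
Your argument is correct: the auxiliary $\cN$-form is the right inductive invariant (since the Joint Responsibility axiom consumes $\cN\B$-premises), the S5 bookkeeping you outline (Negative Introspection to $\N$-stabilize the $\cN\B$-hypotheses, and $\cN\N\alpha\to\N\alpha$ for the extraction) goes through, and the final Fairness/Truth step yielding $\cN\B^s_C\phi,\phi\vdash\B^s_C\phi$ is exactly what is needed. The paper does not spell out its own proof here---it simply defers to \cite[Lemma~5]{nt19aaai} ``with the superscript added''---so there is nothing substantive to compare; your induction-plus-Fairness route is the natural argument and is in the same spirit as what that reference establishes.
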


The following two lemmas capture well-know properties of S5 modality. Their proofs, for example, could be found in~\cite{nt18aamas}.
\begin{lemma}\label{super distributivity}
If $\phi_1,\dots,\phi_n\vdash\psi$, then $\N\phi_1,\dots,\N\phi_n\vdash\N\psi$. \qed
\end{lemma}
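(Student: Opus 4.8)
The plan is to prove the statement by induction on $n$, using only three ingredients: the deduction theorem for the propositional fragment (available because derivations from a set of premises use only Modus Ponens, while all propositional tautologies are axioms), the Necessitation rule, and the Distributivity axiom.

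In the base case $n=0$ the hypothesis is $\vdash\psi$, and a single application of Necessitation yields $\vdash\N\psi$. For the inductive step, assume the claim for $n-1$ premises and suppose $\phi_1,\dots,\phi_n\vdash\psi$. By the deduction theorem, $\phi_1,\dots,\phi_{n-1}\vdash\phi_n\to\psi$. The induction hypothesis then gives $\N\phi_1,\dots,\N\phi_{n-1}\vdash\N(\phi_n\to\psi)$. Combining this with the Distributivity instance $\N(\phi_n\to\psi)\to(\N\phi_n\to\N\psi)$ via Modus Ponens produces $\N\phi_1,\dots,\N\phi_{n-1}\vdash\N\phi_n\to\N\psi$, and one further application of Modus Ponens, now using the added premise $\N\phi_n$, yields $\N\phi_1,\dots,\N\phi_n\vdash\N\psi$, completing the induction.

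I do not expect any genuine obstacle: the only point requiring a little care is the appeal to the propositional deduction theorem, which is legitimate here precisely because Modus Ponens is the sole rule used in derivations from hypotheses. This is the routine argument showing that every normal modality---in particular the S5 modality $\N$---preserves multi-premise derivability, which is why the excerpt simply cites it rather than reproducing the proof.
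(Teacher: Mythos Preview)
Your proof is correct and is exactly the standard argument for this fact. The paper itself does not supply a proof at all: it simply remarks that this is a well-known property of S5 modalities and defers to~\cite{nt18aamas}, so there is nothing to compare against beyond noting that your induction via the deduction lemma, Necessitation, and Distributivity is precisely the routine derivation one finds in such references (and the paper's own later use of the ``deduction lemma'' in Lemma~\ref{five plus plus} confirms that derivations from hypotheses are taken in the sense that makes your appeal to it legitimate).
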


\begin{lemma}[Positive Introspection]\label{positive introspection lemma}
$\vdash \N\phi\to\N\N\phi$. \qed
\end{lemma}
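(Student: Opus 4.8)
The statement $\vdash\N\phi\to\N\N\phi$ is the familiar fact that the Positive Introspection (axiom~4) schema is derivable in any S5 modal logic from the Truth (T), Negative Introspection (Euclidean, 5) and Distributivity (K) axioms together with the Necessitation rule. Since modality $\N$ is governed here by exactly these principles (axioms~1--3 and the Necessitation rule), the plan is to do purely propositional reasoning on top of a few axiom instances, using Necessitation and Distributivity just once each.

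First I would establish two ``one-step'' implications, each an immediate propositional consequence of a single axiom instance. From the Truth axiom in the instance $\N\neg\N\phi\to\neg\N\phi$, contraposition gives $\N\phi\to\neg\N\neg\N\phi$, that is, $\N\phi\to\cN\N\phi$. From the Negative Introspection axiom $\neg\N\phi\to\N\neg\N\phi$, contraposition gives $\neg\N\neg\N\phi\to\N\phi$, that is, $\cN\N\phi\to\N\phi$.

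Next I would ``lift'' these facts under $\N$. Instantiating the Negative Introspection axiom with $\neg\N\phi$ in place of $\phi$ yields $\neg\N(\neg\N\phi)\to\N\neg\N(\neg\N\phi)$, i.e.\ $\cN\N\phi\to\N\cN\N\phi$; chaining this with $\N\phi\to\cN\N\phi$ gives $\N\phi\to\N\cN\N\phi$. On the other side, applying Necessitation to $\cN\N\phi\to\N\phi$ gives $\N(\cN\N\phi\to\N\phi)$, and then the Distributivity axiom together with Modus Ponens gives $\N\cN\N\phi\to\N\N\phi$. Composing $\N\phi\to\N\cN\N\phi$ with $\N\cN\N\phi\to\N\N\phi$ yields $\N\phi\to\N\N\phi$, as required.

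There is no genuine obstacle here; the only point requiring care is the bookkeeping of the double negations hidden inside the abbreviation $\cN$ when selecting substitution instances, i.e.\ recognising that $\cN\N\phi$ is literally $\neg\N\neg\N\phi$, so that both the instance $\neg\N(\neg\N\phi)\to\N\neg\N(\neg\N\phi)$ and the instance $\neg\N\phi\to\N\neg\N\phi$ of the Negative Introspection axiom are exactly what is needed. Everything else is tautological reasoning in $\Phi$, which the system subsumes.
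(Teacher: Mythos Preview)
Your derivation is correct: it is the standard proof that axiom~4 follows from T, 5, K, and Necessitation, and every step checks out once one unfolds $\cN\N\phi$ as $\neg\N\neg\N\phi$. The paper itself does not prove this lemma at all---it simply marks it with $\boxtimes$ and refers the reader to~\cite{nt18aamas} for a proof---so your proposal in fact supplies strictly more detail than the paper does, via exactly the route one would expect from the cited source.
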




We conclude this section with an example of a formal proof in our logical system. This example will be used later in the proof of the completeness.

\begin{lemma}\label{five plus plus}
For any integer $n\ge 0$ and any disjoint sets $D_1,\dots,D_n\subseteq C$ if $t_1+\dots+t_n\le s$, then
$$
\{\cN\B^{t_i}_{D_i}\chi_i\}_{i=1}^n,\N(\phi\to\chi_1\vee\dots\vee\chi_n)\vdash\N(\phi\to\B^{s}_C\phi).
$$
\end{lemma}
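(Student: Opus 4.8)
The plan is to combine Lemma~\ref{super joint responsibility lemma} with the Fairness axiom and the S5 machinery for $\N$. First I would apply Lemma~\ref{super joint responsibility lemma} to the hypotheses $\{\cN\B^{t_i}_{D_i}\chi_i\}_{i=1}^n$ together with the (temporarily assumed) disjunction $\chi_1\vee\dots\vee\chi_n$ to obtain
$$
\{\cN\B^{t_i}_{D_i}\chi_i\}_{i=1}^n,\ \chi_1\vee\dots\vee\chi_n\ \vdash\ \B^{t_1+\dots+t_n}_{D_1\cup\dots\cup D_n}(\chi_1\vee\dots\vee\chi_n).
$$
Since $D_1\cup\dots\cup D_n\subseteq C$ and $t_1+\dots+t_n\le s$, the Monotonicity axiom upgrades this to $\B^{s}_{C}(\chi_1\vee\dots\vee\chi_n)$. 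Now I would invoke the Blame for Cause axiom: from $\N(\phi\to\chi_1\vee\dots\vee\chi_n)$ and $\B^{s}_C(\chi_1\vee\dots\vee\chi_n)$ we get $\phi\to\B^s_C\phi$. Chaining these deductions yields
$$
\{\cN\B^{t_i}_{D_i}\chi_i\}_{i=1}^n,\ \N(\phi\to\chi_1\vee\dots\vee\chi_n),\ \chi_1\vee\dots\vee\chi_n,\ \phi\ \vdash\ \B^s_C\phi.
$$

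Next I would discharge the assumptions $\phi$ and $\chi_1\vee\dots\vee\chi_n$. Discharging $\phi$ is immediate by the deduction theorem (propositional reasoning), giving $\phi\to\B^s_C\phi$ under the remaining premises. To eliminate the assumption $\chi_1\vee\dots\vee\chi_n$, note that the Truth axiom $\B^s_C\phi\to\phi$ together with $\N(\phi\to\chi_1\vee\dots\vee\chi_n)$ and its Truth instance $\N(\dots)\to(\phi\to\chi_1\vee\dots\vee\chi_n)$ lets us derive $\phi\to\chi_1\vee\dots\vee\chi_n$; so actually $\phi\to\B^s_C\phi$ follows from $\{\cN\B^{t_i}_{D_i}\chi_i\}_{i=1}^n$ and $\N(\phi\to\chi_1\vee\dots\vee\chi_n)$ alone, without assuming the disjunction separately. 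Thus
$$
\{\cN\B^{t_i}_{D_i}\chi_i\}_{i=1}^n,\ \N(\phi\to\chi_1\vee\dots\vee\chi_n)\ \vdash\ \phi\to\B^s_C\phi.
$$

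Finally I would lift this derivation under the $\N$ modality using Lemma~\ref{super distributivity}. The subtlety is that the premises $\cN\B^{t_i}_{D_i}\chi_i$ are not prefixed by $\N$, so I cannot apply Lemma~\ref{super distributivity} directly. The fix is to first convert each premise: by the Negative Introspection axiom (in the form $\cN\alpha\to\N\cN\alpha$, which is the contrapositive-flavored S5 principle that $\neg\N\neg\alpha$ implies $\N\neg\N\neg\alpha$), we have $\cN\B^{t_i}_{D_i}\chi_i\vdash\N\cN\B^{t_i}_{D_i}\chi_i$, and by Positive Introspection (Lemma~\ref{positive introspection lemma}), $\N(\phi\to\chi_1\vee\dots\vee\chi_n)\vdash\N\N(\phi\to\chi_1\vee\dots\vee\chi_n)$. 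Applying Lemma~\ref{super distributivity} to the already-established derivation of $\phi\to\B^s_C\phi$ from the premises $\{\cN\B^{t_i}_{D_i}\chi_i\}_{i=1}^n$ and $\N(\phi\to\chi_1\vee\dots\vee\chi_n)$ gives $\{\N\cN\B^{t_i}_{D_i}\chi_i\}_{i=1}^n,\N\N(\phi\to\chi_1\vee\dots\vee\chi_n)\vdash\N(\phi\to\B^s_C\phi)$, and composing with the two introspection steps closes the argument.

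The main obstacle is exactly this last bookkeeping: making sure the un-boxed premise $\cN\B^{t_i}_{D_i}\chi_i$ can be pushed under an $\N$ via Negative Introspection before Lemma~\ref{super distributivity} is applied, and likewise that the $\N$-premise survives via Positive Introspection. Once the premises are in the right ``boxed'' shape, everything else is routine propositional reasoning plus one application each of Monotonicity, Blame for Cause, and Truth. The $n=0$ case is degenerate: the hypothesis set is empty, $\chi_1\vee\dots\vee\chi_n$ is the empty disjunction (i.e.\ $\bot$), and $\N(\phi\to\bot)$ together with the Truth axiom gives $\N\neg\phi$, from which $\N(\phi\to\B^s_C\phi)$ follows trivially.
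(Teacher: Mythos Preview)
Your proposal is correct and follows essentially the same route as the paper: Lemma~\ref{super joint responsibility lemma} plus Monotonicity to obtain $\B^s_C(\chi_1\vee\dots\vee\chi_n)$, then Blame for Cause and the Truth axiom for $\N$ to discharge the disjunction and reach $\phi\to\B^s_C\phi$, and finally Lemma~\ref{super distributivity} together with Negative and Positive Introspection to box the conclusion. Two cosmetic remarks: you announce the Fairness axiom in the first sentence but never use it (nor does the paper), and the invocation of the Truth axiom $\B^s_C\phi\to\phi$ in your discharge step is superfluous---only the $\N$-Truth instance is needed there.
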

\begin{proof}
By Lemma~\ref{super joint responsibility lemma},
$$
\{\cN\B^{t_i}_{D_i}\chi_i\}_{i=1}^n,\chi_1\vee\dots\vee\chi_n\vdash \B^{t_1+\dots+t_n}_{D_1\cup\dots\cup D_n}(\chi_1\vee\dots\vee\chi_n).
$$
Hence, by the Monotonicity axiom, 
$$
\{\cN\B^{t_i}_{D_i}\chi_i\}_{i=1}^n,\chi_1\vee\dots\vee\chi_n\vdash \B^{s}_{C}(\chi_1\vee\dots\vee\chi_n).
$$
Then, by the Modus Ponens inference rule,
$$
\{\cN\B^{t_i}_{D_i}\chi_i\}_{i=1}^n,\phi,\phi\to\chi_1\vee\dots\vee\chi_n\vdash \B^{s}_C(\chi_1\vee\dots\vee\chi_n).
$$
Hence, by the Truth axiom and the Modus Ponens  rule,
$$
\{\cN\B^{t_i}_{D_i}\chi_i\}_{i=1}^n,\phi,\N(\phi\to\chi_1\vee\dots\vee\chi_n)\vdash \B^{s}_C(\chi_1\vee\dots\vee\chi_n).
$$
At the same time,
$$\N(\phi\to\chi_1\vee\dots\vee\chi_n)\to(\B^{s}_C(\chi_1\vee\dots\vee\chi_n)\to(\phi\to\B^{s}_C\phi))$$
is an instance of the Blame for Cause axiom. Thus, by the Modus Ponens inference rule applied twice,
$$
\{\cN\B^{t_i}_{D_i}\chi_i\}_{i=1}^n,\phi,\N(\phi\to\chi_1\vee\dots\vee\chi_n)\vdash\phi\to\B^s_C\phi.
$$
Then, by the Modus Ponens inference rule,
$$
\{\cN\B^{t_i}_{D_i}\chi_i\}_{i=1}^n,\phi, \N(\phi\to\chi_1\vee\dots\vee\chi_n)\vdash\B^s_C\phi.
$$
Hence, by the deduction lemma,
$$
\{\cN\B^{t_i}_{D_i}\chi_i\}_{i=1}^n,\N(\phi\to\chi_1\vee\dots\vee\chi_n)\vdash\phi\to\B^s_C\phi.
$$
Thus, by Lemma~\ref{super distributivity},
$$
\{\N\cN\B^{t_i}_{D_i}\chi_i\}_{i=1}^n,\N\N(\phi\to\chi_1\vee\dots\vee\chi_n)\vdash\N(\phi\to\B^s_C\phi).
$$
Then, by the definition of modality $\cN$, the Negative Introspection axiom, and the Modus Ponens inference rule,
$$
\{\cN\B^{t_i}_{D_i}\chi_i\}_{i=1}^n,\N\N(\phi\to\chi_1\vee\dots\vee\chi_n)\vdash\N(\phi\to\B^s_C\phi).
$$
Therefore, by Lemma~\ref{positive introspection lemma} and the Modus Ponens inference rule, the statement of the lemma is true. 
\end{proof}

\section{Soundness}\label{soundness section}

In this section we prove soundness of our logical system. The soundness of S5 axioms (the Truth, the Distributivity, and the Negative Introspection) for modality $\N$ is straightforward. Below we prove the soundness of the remaining axioms for any arbitrary play $(\delta,\omega)\in P$ of an arbitrary game $(\Delta,\|\cdot\|,d_0,\Omega,P,\pi)$. 

\begin{lemma}
$(\delta,\omega)\nVdash \B^s_\varnothing\phi$, for all $s \ge 0$. 
\end{lemma}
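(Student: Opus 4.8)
The plan is to argue by contradiction. Suppose, toward a contradiction, that $(\delta,\omega)\Vdash\B^s_\varnothing\phi$ for some $s\ge 0$. Unfolding item~5 of Definition~\ref{sat}, this gives two things: first, $(\delta,\omega)\Vdash\phi$; and second, there is an action profile $\gamma\in\Delta^\varnothing$ with $\|\gamma\|\le s$ such that for every play $(\delta',\omega')\in P$, if $\gamma=_\varnothing\delta'$, then $(\delta',\omega')\nVdash\phi$.

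The key observation is that the relation $\gamma=_\varnothing\delta'$ holds vacuously for every play $(\delta',\omega')$, since there is no agent in $\varnothing$ on which $\gamma$ and $\delta'$ could disagree. (Incidentally, $\Delta^\varnothing$ consists of the single empty function, and its cost is $\|\gamma\|=\sum_{a\in\varnothing}\|a\|=0\le s$ by Definition~\ref{cost of profile definition}, but this will not even be needed.) Hence the second condition collapses to: $(\delta',\omega')\nVdash\phi$ for \emph{every} play $(\delta',\omega')\in P$.

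Instantiating this with the play $(\delta,\omega)$ itself yields $(\delta,\omega)\nVdash\phi$, which contradicts the first condition $(\delta,\omega)\Vdash\phi$. Therefore no such $s$ exists, i.e., $(\delta,\omega)\nVdash\B^s_\varnothing\phi$ for all $s\ge 0$.

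I expect no real obstacle here; the only subtlety worth flagging is the vacuous satisfaction of $\gamma=_\varnothing\delta'$, which is precisely what makes the empty coalition unable to ``steer'' any play away from $\phi$ while $\phi$ is currently true, and thus what validates the None to Blame axiom.
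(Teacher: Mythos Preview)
Your proof is correct and follows essentially the same approach as the paper: assume $(\delta,\omega)\Vdash\B^s_\varnothing\phi$, unfold item~5 of Definition~\ref{sat}, observe that $\gamma=_\varnothing\delta'$ is vacuously true, and instantiate with the play $(\delta,\omega)$ itself to derive the contradiction $(\delta,\omega)\nVdash\phi$. The extra remark about $\Delta^\varnothing$ and the cost of the empty profile is accurate but, as you note, unnecessary.
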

\begin{proof}
Suppose that $(\delta,\omega)\Vdash \B^s_\varnothing\phi$. Thus,
by Definition~\ref{sat}, $(\delta,\omega)\Vdash \phi$ and there is $\gamma\in\Delta^C$ such that $\|\gamma\|\le s$ and for each play $(\delta',\omega')$, if $\gamma=_\varnothing\delta'$, then $(\delta',\omega')\nVdash\phi$. Consider play $(\delta,\omega)$ and note that statement $\gamma=_\varnothing\delta$ is vacuously true. Thus, $(\delta,\omega)\nVdash\phi$, which is a contradiction.
\end{proof}

\begin{lemma}
For all sets $C,D\subseteq \mathcal{A}$ and all $s,t\ge 0$, if $C\subseteq D$, $s\leq t$, and $(\delta,\omega)\Vdash \B^s_C\phi$, then $(\delta,\omega)\Vdash \B^t_D\phi$.
\end{lemma}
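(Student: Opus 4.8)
The plan is to unfold both sides through clause~5 of Definition~\ref{sat} and then build the witnessing action profile for the larger coalition $D$ by padding the witness for $C$ with the zero-cost action $d_0$. Concretely, I would first assume $(\delta,\omega)\Vdash\B^s_C\phi$ and extract from Definition~\ref{sat} both the fact that $(\delta,\omega)\Vdash\phi$ and an action profile $\gamma\in\Delta^C$ with $\|\gamma\|\le s$ such that every play $(\delta',\omega')\in P$ with $\gamma=_C\delta'$ satisfies $(\delta',\omega')\nVdash\phi$.

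Next I would define $\gamma'\in\Delta^D$ by setting $\gamma'(a)=\gamma(a)$ for $a\in C$ and $\gamma'(a)=d_0$ for $a\in D\setminus C$; this is well-defined since $C\subseteq D$. Using Definition~\ref{cost of profile definition} and the fact that $\|d_0\|=0$ from Definition~\ref{game definition}, I would compute $\|\gamma'\|=\sum_{a\in D}\|\gamma'(a)\|=\sum_{a\in C}\|\gamma(a)\|+\sum_{a\in D\setminus C}\|d_0\|=\|\gamma\|\le s\le t$, so $\gamma'$ respects the cost bound $t$.

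Then I would verify the prevention condition: if a play $(\delta',\omega')\in P$ satisfies $\gamma'=_D\delta'$, then in particular $\gamma'=_C\delta'$, and since $\gamma'$ agrees with $\gamma$ on $C$ we get $\gamma=_C\delta'$, hence $(\delta',\omega')\nVdash\phi$ by the property of $\gamma$. Combining this with $(\delta,\omega)\Vdash\phi$ and the cost bound on $\gamma'$, clause~5 of Definition~\ref{sat} gives $(\delta,\omega)\Vdash\B^t_D\phi$.

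There is no serious obstacle here; the only point that genuinely matters is the availability of the zero-cost action $d_0$, which is exactly what lets the agents in $D\setminus C$ be added without increasing the total cost — as remarked after Definition~\ref{sat}, Monotonicity would fail without this assumption. I would make sure the write-up flags that this is where $\|d_0\|=0$ is used.
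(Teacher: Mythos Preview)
Your proposal is correct and follows essentially the same approach as the paper: extract the witness $\gamma$ for $C$, pad it with $d_0$ on $D\setminus C$ to get $\gamma'$, use $\|d_0\|=0$ to keep the cost within $t$, and observe that $\gamma'=_D\delta'$ implies $\gamma=_C\delta'$. If anything, your cost computation $\|\gamma'\|=\|\gamma\|\le s\le t$ is slightly more careful than the paper's, which writes $\|\gamma'\|=\|\gamma\|=s$ even though only $\|\gamma\|\le s$ is given.
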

\begin{proof}
By Definition~\ref{sat}, assumption $(\delta,\omega)\Vdash \B^s_C\phi$ implies that (i) $(\delta,\omega)\Vdash\phi$ and (ii) there is $\gamma\in \Delta^C$ such that $\|\gamma\|\le s$ and for each play $(\delta',\omega')\in P$, if $\gamma=_C\delta'$, then $(\delta',\omega')\nVdash\phi$. Define action profile $\gamma'\in\Delta^D$ as follows:
$$
\gamma'(a)=
\begin{cases}
\gamma(a), & \mbox{if } a \in C\\
d_0, & \mbox{otherwise}.
\end{cases}
$$
Because $d_0$ is a zero-cost action,  $\|\gamma'\|=\|\gamma\|=s\le t$ by Definition~\ref{cost of profile definition} and the assumption $s\le t$. Consider any play $(\delta',\omega')\in P$ such that $\gamma'=_D\delta'$. By Definition~\ref{sat} and because $(\delta,\omega)\Vdash\phi$, it suffices to show that $(\delta',\omega')\nVdash\phi$. Indeed, $\gamma=_C\gamma'=_C\delta'$. Therefore, $(\delta',\omega')\nVdash\phi$ by the choice of action profile $\gamma$.
\end{proof}

\begin{lemma}
For all $C,D\subseteq\mathcal{A}$ and all $s,t\ge 0$, if $C\cap D=\varnothing$, $(\delta,\omega)\Vdash \cN\B^s_C\phi$, $(\delta,\omega)\Vdash \cN\B^t_D\psi$, and $(\delta,\omega)\Vdash \phi\vee\psi$, then $(\delta,\omega)\Vdash \B^{s+t}_{C\cup D}(\phi\vee\psi)$.
\end{lemma}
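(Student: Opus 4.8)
The plan is to unfold the two $\cN$-assumptions into existence statements about plays, extract the corresponding witness action profiles for $C$ and $D$, glue them together using the disjointness of $C$ and $D$, and then verify the two clauses of Definition~\ref{sat}(5) for $\B^{s+t}_{C\cup D}(\phi\vee\psi)$.

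First I would spell out the semantics of $\cN$. Since $\cN\chi$ abbreviates $\neg\N\neg\chi$, the assumption $(\delta,\omega)\Vdash\cN\B^s_C\phi$ says, by Definition~\ref{sat}(2) and (4), that there exists a play $(\delta_1,\omega_1)\in P$ with $(\delta_1,\omega_1)\Vdash\B^s_C\phi$; likewise $(\delta,\omega)\Vdash\cN\B^t_D\psi$ yields a play $(\delta_2,\omega_2)\in P$ with $(\delta_2,\omega_2)\Vdash\B^t_D\psi$. Applying Definition~\ref{sat}(5) to each of these, I obtain action profiles $\gamma_1\in\Delta^C$ and $\gamma_2\in\Delta^D$ such that $\|\gamma_1\|\le s$, $\|\gamma_2\|\le t$, and, crucially, for every play $(\delta',\omega')\in P$, $\gamma_1=_C\delta'$ implies $(\delta',\omega')\nVdash\phi$, and $\gamma_2=_D\delta'$ implies $(\delta',\omega')\nVdash\psi$. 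Note that the specific plays $(\delta_1,\omega_1)$ and $(\delta_2,\omega_2)$ are only used to harvest $\gamma_1$ and $\gamma_2$; the ``for each play'' quantifier in clause~5 is global, so these profiles retain their blocking property against all plays.

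Next I would define $\gamma\in\Delta^{C\cup D}$ by $\gamma(a)=\gamma_1(a)$ for $a\in C$ and $\gamma(a)=\gamma_2(a)$ for $a\in D$; this is well defined precisely because $C\cap D=\varnothing$. By Definition~\ref{cost of profile definition}, $\|\gamma\|=\sum_{a\in C}\|\gamma_1(a)\|+\sum_{a\in D}\|\gamma_2(a)\|=\|\gamma_1\|+\|\gamma_2\|\le s+t$. For the blocking condition, take any play $(\delta',\omega')\in P$ with $\gamma=_{C\cup D}\delta'$. Then $\gamma_1=_C\delta'$, so $(\delta',\omega')\nVdash\phi$, and $\gamma_2=_D\delta'$, so $(\delta',\omega')\nVdash\psi$; hence $(\delta',\omega')\nVdash\phi\vee\psi$. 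Together with the hypothesis $(\delta,\omega)\Vdash\phi\vee\psi$, Definition~\ref{sat}(5) gives $(\delta,\omega)\Vdash\B^{s+t}_{C\cup D}(\phi\vee\psi)$, as required.

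I do not expect a serious obstacle here; the one point requiring care is the translation of $\cN$ into ``there exists a play where $\B$ holds'' and the observation that the extracted witnesses $\gamma_1,\gamma_2$ are useful at the original play $(\delta,\omega)$ even though they were obtained from other plays — this works only because the universal quantifier over plays inside Definition~\ref{sat}(5) does not depend on the play at which $\B$ is evaluated. The disjointness of $C$ and $D$ is what makes the concatenation $\gamma$ a genuine function and what makes its cost split additively as $s+t$ rather than something larger.
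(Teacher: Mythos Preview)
Your proof is correct and follows essentially the same route as the paper: unfold $\cN$ to witnessing plays, extract the blocking profiles $\gamma_1,\gamma_2$ from Definition~\ref{sat}(5), glue them into a single $\gamma\in\Delta^{C\cup D}$ using $C\cap D=\varnothing$, check the cost bound via Definition~\ref{cost of profile definition}, and verify the blocking condition for $\phi\vee\psi$. Your explicit remark that the ``for each play'' quantifier in clause~5 is global (so $\gamma_1,\gamma_2$ work at $(\delta,\omega)$ even though harvested elsewhere) is exactly the point that makes the argument go through.
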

\begin{proof}
By Definition~\ref{sat} and the definition of modality $\cN$, assumption  $(\delta,\omega)\Vdash \cN\B^s_C\phi$ implies that there is a play $(\delta',\omega')\in P$ such that $(\delta',\omega')\Vdash \B^s_C\phi$. Thus, by Definition~\ref{sat}, there is an action profile $\gamma_1\in \Delta^C$ such that $\|\gamma_1\|\le s$ and 
\begin{equation}\label{eq gamma 1}
    \forall \delta''\forall\omega''((\delta'',\omega'')\in P\wedge \gamma_1=_C\delta''\to(\delta'',\omega'')\nVdash \phi).
\end{equation}
Similarly, assumption $(\delta,\omega)\Vdash \cN\B^t_D\psi$ implies that there is an action profile $\gamma_2\in \Delta^D$ such that $\|\gamma_2\|\le t$ and 
\begin{equation}\label{eq gamma 2}
    \forall \delta''\forall\omega''((\delta'',\omega'')\in P\wedge\gamma_2=_C\delta''\to(\delta'',\omega'')\nVdash \psi).
\end{equation}
Consider action profile $\gamma\in\Delta^{C\cup D}$  such that
\begin{equation}\label{definition of gamma}
 \gamma(a)=
\begin{cases}
\gamma_1(a), & \mbox{if } a\in C,\\
\gamma_2(a), & \mbox{if } a\in D.
\end{cases}  
\end{equation}
Action profile $\gamma$ is well-defined because $C\cap D=\varnothing$ by the assumption of the lemma. Note that $\|\gamma\|=\|\gamma_1\|+\|\gamma_2\|$ by Definition~\ref{cost of profile definition} and inequalities $\|\gamma_1\|\le s$ and $\|\gamma_2\|\le t$.

Then, by Definition~\ref{sat} and the assumption $(\delta,\omega)\Vdash \phi\vee\psi$, it suffices to show that 
 for each play $(\delta''',\omega''')$, if $\gamma=_{C\cup D}\delta'''$, then $(\delta''',\omega''')\nVdash\phi\vee\psi$. Indeed, equality $\gamma=_{C\cup D}\delta'''$ implies  $\gamma=_{C}\delta'''$. Thus, $\gamma_1=_{C}\delta'''$ by equation~(\ref{definition of gamma}). Hence, $(\delta''',\omega''')\nVdash\phi$ by statement~(\ref{eq gamma 1}). Similarly, $(\delta''',\omega''')\nVdash\psi$ by statement~(\ref{eq gamma 2}). Then, $(\delta''',\omega''')\nVdash\phi\vee\psi$, by Definition~\ref{sat}.
\end{proof}

\begin{lemma}
If $(\delta,\omega)\Vdash \N(\phi\to\psi)$, $(\delta,\omega)\Vdash \B^s_C\psi$, and $(\delta,\omega)\Vdash \phi$, then $(\delta,\omega)\Vdash \B^s_C\phi$.
\end{lemma}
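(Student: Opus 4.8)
The idea is to reuse, verbatim, the action profile that witnesses $\B^s_C\psi$ as the witness for $\B^s_C\phi$. First I would unfold the assumption $(\delta,\omega)\Vdash\B^s_C\psi$ via item~5 of Definition~\ref{sat}: this yields some $\gamma\in\Delta^C$ with $\|\gamma\|\le s$ such that every play $(\delta',\omega')\in P$ with $\gamma=_C\delta'$ satisfies $(\delta',\omega')\nVdash\psi$. (It also yields $(\delta,\omega)\Vdash\psi$, which we will not need.) Now I claim this same $\gamma$ witnesses $(\delta,\omega)\Vdash\B^s_C\phi$.

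To verify this via item~5 of Definition~\ref{sat}, two things are required. The first conjunct, $(\delta,\omega)\Vdash\phi$, is one of the hypotheses of the lemma, so there is nothing to do. The cost bound $\|\gamma\|\le s$ is already in hand. It remains to show that for every play $(\delta',\omega')\in P$ with $\gamma=_C\delta'$ we have $(\delta',\omega')\nVdash\phi$. Fix such a play. From the choice of $\gamma$ we get $(\delta',\omega')\nVdash\psi$. Suppose toward a contradiction that $(\delta',\omega')\Vdash\phi$. Since $(\delta,\omega)\Vdash\N(\phi\to\psi)$, item~4 of Definition~\ref{sat} gives $(\delta',\omega')\Vdash\phi\to\psi$, and then item~3 gives $(\delta',\omega')\Vdash\psi$, contradicting $(\delta',\omega')\nVdash\psi$. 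Hence $(\delta',\omega')\nVdash\phi$, as needed.

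There is no real obstacle here: the only point worth emphasizing is that the universal modality $\N$ forces $\phi\to\psi$ to hold \emph{at the play $(\delta',\omega')$ in question}, which is exactly what lets the failure of $\psi$ at that play propagate to the failure of $\phi$. Everything else is bookkeeping with Definition~\ref{sat}.
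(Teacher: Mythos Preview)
Your proof is correct and follows essentially the same route as the paper: extract the witnessing profile $\gamma$ from $(\delta,\omega)\Vdash\B^s_C\psi$, then use $(\delta,\omega)\Vdash\N(\phi\to\psi)$ at each play $(\delta',\omega')$ with $\gamma=_C\delta'$ to turn $(\delta',\omega')\nVdash\psi$ into $(\delta',\omega')\nVdash\phi$, and combine with the hypothesis $(\delta,\omega)\Vdash\phi$. The only difference is that you spell out the contrapositive step as a contradiction argument, whereas the paper phrases it in one line.
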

\begin{proof}
By Definition~\ref{sat}, assumption $(\delta,\omega)\Vdash \B^s_C\psi$ implies that there is an action profile $\gamma\in\Delta^C$ such that $\|\gamma\|\le s$ and for each play $(\delta',\omega')\in P$, if $\gamma=_C\delta'$, then $(\delta',\omega')\nVdash\psi$. 

Thus, by Definition~\ref{sat} and assumption $(\delta,\omega)\Vdash \N(\phi\to\psi)$, for each play $(\delta',\omega')$, if $\gamma=_C\delta'$, then $(\delta',\omega')\nVdash\phi$. Therefore, assumption $(\delta,\omega)\Vdash \phi$ implies that $(\delta,\omega)\Vdash \B^s_C\phi$ again by Definition~\ref{sat}.
\end{proof}

\begin{lemma}
If $(\delta,\omega)\Vdash \B^s_C\phi$, then $(\delta,\omega)\Vdash \N(\phi\to\B^s_C\phi)$.
\end{lemma}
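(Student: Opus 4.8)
The plan is to unwind Definition~\ref{sat} for $\B^s_C$ and to exploit the fact that the ``prevention'' clause of blameworthiness is a \emph{global} property of the game that does not refer to the current play. Concretely, $(\delta,\omega)\Vdash\B^s_C\phi$ asserts two things: first, the local fact $(\delta,\omega)\Vdash\phi$; and second, the play-independent fact that there exists an action profile $\gamma\in\Delta^C$ with $\|\gamma\|\le s$ such that every play $(\delta',\omega')\in P$ with $\gamma=_C\delta'$ satisfies $(\delta',\omega')\nVdash\phi$. The witness $\gamma$ supplied by the second fact can therefore be reused verbatim at any other play of the game.

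First I would assume $(\delta,\omega)\Vdash\B^s_C\phi$ and extract from Definition~\ref{sat} such an action profile $\gamma\in\Delta^C$ with $\|\gamma\|\le s$ and the property that $(\delta',\omega')\nVdash\phi$ for every play $(\delta',\omega')\in P$ with $\gamma=_C\delta'$. Next, to establish $(\delta,\omega)\Vdash\N(\phi\to\B^s_C\phi)$, I would pick an arbitrary play $(\delta',\omega')\in P$ and, by item~4 of Definition~\ref{sat}, reduce the goal to showing $(\delta',\omega')\Vdash\phi\to\B^s_C\phi$. I would then assume $(\delta',\omega')\Vdash\phi$ and verify $(\delta',\omega')\Vdash\B^s_C\phi$ directly from item~5 of Definition~\ref{sat}, using the very same $\gamma$: the cost bound $\|\gamma\|\le s$ and the prevention property were already secured above, and the only remaining obligation, $(\delta',\omega')\Vdash\phi$, is exactly the current assumption.

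There is essentially no hard step here: the argument is a one-quantifier reshuffle once one notices that the existential witness $\gamma$ in item~5 of Definition~\ref{sat} is chosen independently of $(\delta,\omega)$. The only point that deserves a word of care is making explicit that the clause ``for each play $(\delta',\omega')\in P$, if $\gamma=_C\delta'$ then $(\delta',\omega')\nVdash\phi$'' does not mention the play at which $\B^s_C\phi$ is being evaluated, so it transfers unchanged; everything else is bookkeeping against Definition~\ref{sat}.

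\begin{proof}
Suppose $(\delta,\omega)\Vdash\B^s_C\phi$. Then, by item~5 of Definition~\ref{sat}, there is an action profile $\gamma\in\Delta^C$ such that $\|\gamma\|\le s$ and for each play $(\delta',\omega')\in P$, if $\gamma=_C\delta'$, then $(\delta',\omega')\nVdash\phi$. Consider an arbitrary play $(\delta',\omega')\in P$. By item~4 of Definition~\ref{sat}, it suffices to show that $(\delta',\omega')\Vdash\phi\to\B^s_C\phi$. Assume $(\delta',\omega')\Vdash\phi$. Then, using the same action profile $\gamma$, for which $\|\gamma\|\le s$ and for each play $(\delta'',\omega'')\in P$, if $\gamma=_C\delta''$, then $(\delta'',\omega'')\nVdash\phi$, we conclude by item~5 of Definition~\ref{sat} that $(\delta',\omega')\Vdash\B^s_C\phi$.
\end{proof}
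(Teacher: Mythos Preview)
Your proof is correct and rests on the same key observation as the paper's: the existential witness $\gamma$ in item~5 of Definition~\ref{sat} is chosen independently of the play, so it can be reused at every other play. The only difference is presentational: the paper argues by contrapositive (assuming $(\delta,\omega)\nVdash\N(\phi\to\B^s_C\phi)$, obtaining a play where $\phi$ holds but $\B^s_C\phi$ fails, and concluding that no suitable $\gamma$ exists globally), whereas you argue directly by fixing $\gamma$ first and carrying it to an arbitrary play. Your direct route is slightly cleaner and makes the play-independence of $\gamma$ more transparent; the paper's contrapositive buys nothing extra here.
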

\begin{proof}
Suppose that $(\delta,\omega)\nVdash \N(\phi\to\B^s_C\phi)$. Thus, by Definition~\ref{sat}, there is a play $(\delta',\omega')\in P$, such that $(\delta',\omega')\nVdash\phi\to\B^s_C\phi$. Then, 
$(\delta',\omega')\Vdash\phi$ and $(\delta',\omega')\nVdash\B^s_C\phi$ by Definition~\ref{sat}.
Hence, again by Definition~\ref{sat}, for each action profile $\gamma\in\Delta^C$ such that $\|\gamma\|\le s$, there is a play $(\delta'',\omega'')\in P$, such that $\gamma=_C\delta''$ and  $(\delta'',\omega'')\nVdash\phi$. Therefore, $(\delta,\omega)\Vdash \B^s_C\phi$ by Definition~\ref{sat}.
\end{proof}

\section{Completeness}\label{completeness section}

We start the proof of
the completeness by defining the canonical game $G(\omega_0)=\left(\Delta,\|\cdot\|,d_0,\Omega,P,\pi\right)$ for each maximal consistent set of formulae $\omega_0$. 

\begin{definition}\label{canonical Delta}
Set $\Delta$ consists of a zero-cost action $d_0$ and  all triples $(\phi,C,s)$ such that $\phi\in\Phi$ is a formula, $C$ is a nonempty coalition, and $s$ is a non-negative real number.
\end{definition}

Informally, we consider actions as ``votes'' of agents. Zero-cost action $d_0$ could be interpreted as abstaining from voting. Action $(\phi,C,s)$ by an agent $a$ means that agent $a$ is voting as a part of coalition $C$ to prevent $\phi$ at the total cost $s$ to the whole coalition. If agent $a$ votes $(\phi,C,s)$, then statement $\phi$ is not necessarily false in the outcome. The vote aggregation mechanism is given in Definition~\ref{canonical play}.
Definition~\ref{canonical Delta} is substantially different from a similar definition in~\cite{nt19aaai}, where each action consists of just a single formula $\phi$.

\begin{definition}\label{canonical cost definition}
For each action $d\in \Delta$, let $\|d\|=0$ if $d=d_0$ and $\|d\|=\dfrac{s}{|C|}$ if $d=(\phi,C,s)$.
\end{definition}

Informally, $\|d\|=\frac{s}{|C|}$ means that the cost of each joint action is divided evenly between all members of the coalition. Note that size $|C|$ of coalition $C$ is non-zero by Definition~\ref{canonical Delta}.

\begin{definition}\label{canonical outcome}
The set of outcomes $\Omega$ is the set of all maximal consistent sets of formulae $\omega$ such that for each formula $\phi$ if $\N\phi\in \omega_0$, then $\phi\in \omega$.
\end{definition}

\begin{definition}\label{canonical play}
The set $P\subseteq \Delta^\mathcal{A}\times \Omega$ consists of all pairs $(\delta,\omega)$ such that for  any formula $\cN\B^s_C\psi\in \omega_0$, if $\delta(a)=(\psi,C,s)$ for each agent $a\in C$, then $\neg\psi\in \omega$.
\end{definition}
In other words, for each formula $\cN\B^s_C\psi\in \omega_0$, if each member of coalition $C$ votes as a part of $C$ to prevent $\phi$ at cost $s$, then $\phi$ is guaranteed to be false in the outcome.

\begin{definition}\label{canonical pi}
$\pi(p)=\{(\delta,\omega)\in P\;|\; p\in \omega\}$.
\end{definition}

As usual, the key part of the proof of the completeness is the induction, or ``truth'', lemma. In our case this is Lemma~\ref{induction lemma}. The next three lemmas are auxiliary lemmas used in the proof of Lemma~\ref{induction lemma}. 

\begin{lemma}\label{B child exists lemma}
For any play $(\delta,\omega)\in P$, any profile $\gamma\in\Delta^C$, and any formula $\neg(\phi\to \B^s_C\phi)\in \omega$, if $\|\gamma\|\le s$, then there is a play $(\delta',\omega')\in P$ such that $\gamma =_C\delta'$ and $\phi\in \omega'$.
\end{lemma}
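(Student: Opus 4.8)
The plan is to construct the required play $(\delta',\omega')$ directly. For the action profile, set $\delta'(a)=\gamma(a)$ for $a\in C$ and $\delta'(a)=d_0$ otherwise; then $\gamma=_C\delta'$ holds by construction. The real work is in producing a maximal consistent set $\omega'$ that contains $\phi$, lies in $\Omega$, and makes $(\delta',\omega')\in P$.

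Call a triple $(\chi,D,t)$ \emph{triggered} if $\cN\B^t_D\chi\in\omega_0$ and $\delta'(a)=(\chi,D,t)$ for every $a\in D$. Since $\delta'$ equals the non-triple action $d_0$ outside $C$, every triggered triple has $D\subseteq C$ and satisfies $\gamma(a)=(\chi,D,t)$ for all $a\in D$. Two distinct triggered triples have disjoint coalitions, since a shared agent would be assigned both triples by $\gamma$. Furthermore, by Definition~\ref{canonical cost definition} each $a\in D$ of a triggered triple $(\chi,D,t)$ has $\|\gamma(a)\|=t/|D|$, so that coalition contributes exactly $t$ to $\|\gamma\|$; hence, because the coalitions of triggered triples are pairwise disjoint subsets of $C$ and costs are non-negative, \emph{any} finite family $(\chi_1,D_1,t_1),\dots,(\chi_n,D_n,t_n)$ of triggered triples satisfies $t_1+\dots+t_n=\sum_{a\in D_1\cup\dots\cup D_n}\|\gamma(a)\|\le\|\gamma\|\le s$.

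Let $\omega'$ be a maximal consistent extension (Lindenbaum's lemma) of
$$X=\{\phi\}\cup\{\psi\mid\N\psi\in\omega_0\}\cup\{\neg\chi\mid(\chi,D,t)\text{ is triggered}\}.$$
The crux is the consistency of $X$. If $X$ were inconsistent, some finite subset would be, which by propositional reasoning gives $\psi_1,\dots,\psi_m\vdash\phi\to(\chi_1\vee\dots\vee\chi_n)$ for some $\psi_j$ with $\N\psi_j\in\omega_0$ and some triggered $(\chi_i,D_i,t_i)$. By Lemma~\ref{super distributivity}, $\N(\phi\to\chi_1\vee\dots\vee\chi_n)\in\omega_0$. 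The sets $D_i\subseteq C$ are disjoint and $t_1+\dots+t_n\le s$ by the previous paragraph, so Lemma~\ref{five plus plus} yields $\{\cN\B^{t_i}_{D_i}\chi_i\}_{i=1}^n,\N(\phi\to\chi_1\vee\dots\vee\chi_n)\vdash\N(\phi\to\B^s_C\phi)$; since every premise here lies in $\omega_0$, we get $\N(\phi\to\B^s_C\phi)\in\omega_0$. As $\omega\in\Omega$ (because $(\delta,\omega)\in P$), Definition~\ref{canonical outcome} gives $\phi\to\B^s_C\phi\in\omega$, contradicting $\neg(\phi\to\B^s_C\phi)\in\omega$.

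Finally, check the membership conditions: $\phi\in X\subseteq\omega'$; $\omega'\in\Omega$ since $\{\psi\mid\N\psi\in\omega_0\}\subseteq X\subseteq\omega'$; and $(\delta',\omega')\in P$ because, for every triggered triple $(\chi,D,t)$, we have $\neg\chi\in X\subseteq\omega'$, which is exactly what Definition~\ref{canonical play} demands of the triples realized by $\delta'$. The main obstacle is the consistency of $X$: the substantive point is recognizing that the cost bookkeeping for the triggered triples — their pairwise disjointness together with the bound $\sum t_i\le\|\gamma\|\le s$ — reproduces precisely the hypotheses under which Lemma~\ref{five plus plus} applies.
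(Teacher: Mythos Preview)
Your proof is correct and follows essentially the same approach as the paper's: both build the set $X=\{\phi\}\cup\{\psi\mid\N\psi\in\omega_0\}\cup\{\neg\chi\mid\ldots\}$, establish its consistency by showing that the relevant triples have pairwise disjoint coalitions contained in $C$ with total cost $\sum t_i\le\|\gamma\|\le s$ and then invoking Lemma~\ref{five plus plus}, extend $X$ to a maximal consistent $\omega'$, and pair it with the profile that agrees with $\gamma$ on $C$ and is $d_0$ elsewhere. The only cosmetic difference is that you define $\delta'$ first and describe the third component of $X$ via your ``triggered'' terminology, while the paper writes that component directly in terms of $\gamma$ and constructs $\delta'$ only afterward.
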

\begin{proof}
Consider the following set $X$ of formulae:
\begin{eqnarray*}
&&\hspace{-5mm}\{\phi\}\;\cup\;\{\psi\;|\;\N\psi\in \omega_0\}\\
&&\hspace{-5mm}\cup\;\{\neg\chi\;|\;\cN\B^t_D\chi\in \omega_0, D\subseteq C,\forall a\in D(\gamma(a)=(\chi,D,t))\}.
\end{eqnarray*}
\begin{claim}
Set $X$ is consistent.
\end{claim}
\begin{proof-of-claim}
Suppose the opposite. Thus, there are formulae
\begin{eqnarray}
&&\N\psi_1,\dots,\N\psi_m\in \omega_0,\label{choice of psi-s}\\
&&\cN\B^{t_1}_{D_1}\chi_1,\dots,\cN\B^{t_n}_{D_n}\chi_n\in \omega_0,\label{choice of chi-s}
\end{eqnarray}
such that
\begin{eqnarray}
&&D_1,\dots,D_n\subseteq C,\label{choice of Ds}\\
&&\gamma(a)=(\chi_i,D_i,t_i)\mbox{ for all } a\in D_i, i\le n\label{choice of votes},\\
&&\psi_1,\dots,\psi_m,\neg\chi_1,\dots,\neg\chi_n\vdash\neg\phi.\label{choice of cons}
\end{eqnarray}
Without loss of generality, we can assume that formulae $\chi_1,\dots,\chi_n$ are distinct. Thus, assumption~(\ref{choice of votes}) implies that sets $D_1,\dots,D_n$ are pairwise disjoint. Hence, by Definition~\ref{canonical cost definition} and formula~(\ref{choice of votes}),
\begin{eqnarray*}
\|\gamma\|&=&\sum_{a\in C}\|\gamma(a)\|\ge \sum_{a\in D_1}\|\gamma(a)\|+\dots + \sum_{a\in D_n}\|\gamma(a)\|\\
&=&\sum_{a\in D_1}\|(\chi_1,D_1,t_1)\|+\dots + \sum_{a\in D_n}\|(\chi_n,D_n,t_n)\|\\
&=& \sum_{a\in D_1}\dfrac{t_1}{|D_1|} + \dots + \sum_{a\in D_n}\dfrac{t_n}{|D_n|}=t_1+\dots+t_n.
\end{eqnarray*}
Thus, by to the assumption $\|\gamma\|\le s$ of the lemma,
\begin{equation}\label{ts le s}
    t_1+\dots+t_n\le s.
\end{equation}

At the same tine, assumption~(\ref{choice of cons}) by the laws of propositional reasoning implies that
$$
\psi_1,\dots,\psi_m\vdash\phi\to\chi_1\vee\dots\vee\chi_n.
$$
Thus,
$
\N\psi_1,\dots,\N\psi_m\vdash\N(\phi\to\chi_1\vee\dots\vee\chi_n).
$
by Lemma~\ref{super distributivity},
Hence, 
$
    \omega_0\vdash\N(\phi\to\chi_1\vee\dots\vee\chi_n
$
by assumption~(\ref{choice of psi-s}).
Thus, by  Lemma~\ref{five plus plus}, using assumption~(\ref{choice of chi-s}), statement (\ref{ts le s}), and the fact that sets $D_1,\dots,D_n\subseteq C$ are pairwise disjoint,
$$
\omega_0\vdash \N(\phi\to\B^{s}_C\phi).
$$
Hence, $\N(\phi\to\B^s_C\phi)\in \omega_0$ because set $\omega_0$ is maximal. Then, $\phi\to\B^{s}_C\phi\in \omega$ by Definition~\ref{canonical outcome}. Thus, $\neg(\phi\to\B^{s}_C\phi)\notin \omega$ because set $\omega$ is consistent, which contradicts assumption $\neg(\phi\to\B^{s}_C\phi)\in \omega$ of the lemma.
Therefore, set $X$ is consistent.
\end{proof-of-claim}

Let $\omega'$ be any maximal consistent extension of set $X$. Thus, $\phi\in X\subseteq\omega'$ by the choice of sets $X$ and $\omega'$. Also, $\omega'\in\Omega$ by Definition~\ref{canonical outcome} and the choice of sets $X$ and $\omega'$.

Let the complete action profile $\delta'$ be defined as follows:
\begin{equation}\label{choice of delta'}
    \delta'(a)=
    \begin{cases}
    s(a), & \mbox{ if } a\in C,\\
    d_0, & \mbox{ otherwise}.
    \end{cases}
\end{equation}
Then, $s=_C\delta'$.

\begin{claim}
$(\delta',\omega')\in P$.
\end{claim}
\begin{proof-of-claim}
Consider any formula $\cN\B^t_D\chi\in \omega_0$ such that $\delta'(a)=(\chi,D,t)$ for each $a\in D$. By Definition~\ref{canonical play}, it suffices to show that $\neg\chi\in \omega'$. 

\noindent{\bf Case I:} $D\subseteq C$. Thus, $\neg\chi\in X$ by the definition of set $X$. Therefore, $\neg\chi\in \omega'$ by the choice of set $\omega'$.

\noindent{\bf Case II:} $D\nsubseteq C$. Consider any $a\in D\setminus C$. Thus, $\delta'(a)=d_0$ by equation~(\ref{choice of delta'}). At the same time, $\delta'(a)=(\chi,D,t)$ because $a\in D$. Therefore, $d_0=(\chi,D,t)$, which is a contradiction.
\end{proof-of-claim}
This concludes the proof of the lemma.
\end{proof}

\begin{lemma}\label{delta exists lemma}
For each outcome $\omega\in\Omega$, there is a complete action profile $\delta\in \Delta^\mathcal{A}$ such that $(\delta,\omega)\in P$.
\end{lemma}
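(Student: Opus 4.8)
The plan is to exhibit the profile explicitly; the natural candidate is the ``everyone abstains'' profile $\delta$ given by $\delta(a)=d_0$ for every agent $a\in\mathcal{A}$. By Definition~\ref{canonical play}, to show $(\delta,\omega)\in P$ it suffices to check that for every formula $\cN\B^s_C\psi\in\omega_0$ the implication ``if $\delta(a)=(\psi,C,s)$ for every $a\in C$, then $\neg\psi\in\omega$'' holds, and the idea is simply to arrange that each such antecedent fails, making the implication vacuously true.

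First I would record the auxiliary fact that any coalition $C$ occurring in a formula $\cN\B^s_C\psi\in\omega_0$ is nonempty. Indeed, by the None to Blame axiom $\vdash\neg\B^s_\varnothing\psi$, so $\vdash\N\neg\B^s_\varnothing\psi$ by the Necessitation rule, which is $\vdash\neg\cN\B^s_\varnothing\psi$ by the definition of the modality $\cN$; since $\omega_0$ is consistent, $\cN\B^s_\varnothing\psi\notin\omega_0$. Given this, fix any $\cN\B^s_C\psi\in\omega_0$ and pick some $a\in C$. Then $\delta(a)=d_0$, and by Definition~\ref{canonical Delta} the zero-cost action $d_0$ is not one of the triples, so $\delta(a)\neq(\psi,C,s)$. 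Hence it is not the case that $\delta(a')=(\psi,C,s)$ for every $a'\in C$, so the required implication holds vacuously. As this holds for every such formula, $(\delta,\omega)\in P$, which completes the proof.

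There is essentially no obstacle in this argument; the only points worth stating carefully are that the coalitions appearing in $\cN\B$-formulae of $\omega_0$ are nonempty (None to Blame together with Necessitation) and that $d_0$ is distinct from every triple action, which is built into Definition~\ref{canonical Delta}. The richness of the action space $\Delta$ is exploited elsewhere, in the proof of the induction lemma, rather than here.
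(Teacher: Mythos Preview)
Your proposal is correct and essentially identical to the paper's proof: both use the all-$d_0$ profile and argue that the antecedent in Definition~\ref{canonical play} fails because $d_0$ is not a triple and, via None to Blame plus Necessitation, the coalition $C$ in any $\cN\B^s_C\psi\in\omega_0$ is nonempty. The only cosmetic difference is that the paper structures this as a two-case analysis on whether $C=\varnothing$, whereas you first dispose of the empty-coalition case and then argue uniformly.
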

\begin{proof}
Consider a complete action profile $\delta$ where $\delta(a)=d_0$ for all $a\in \mathcal{A}$. To show $(\delta,\omega)\in P$, consider any such formula $\cN\B^t_D\chi\in \omega_0$  that $\delta(a)=(\chi,D,t)$ for all $a\in D$. Due to Definition~\ref{canonical play}, it enough to prove that $\neg(\chi,D,t)\in \omega$.

\noindent{\bf Case I}: $D=\varnothing$. Hence, $\vdash\neg\B^t_D\chi$ by the None to Blame axiom. Then, $\vdash\N\neg\B^t_D\chi$ by the Necessitation inference rule. Thus, $\neg\N\neg\B^t_D\chi\notin\omega_0$ by the consistency of the set $\omega_0$. Therefore, $\cN\B^t_D\chi\notin\omega_0$ due to the definition of the modality $\cN$, which contradicts to the assumption $\cN\B^t_D\chi\in \omega_0$. 

\noindent{\bf Case II}: $D\neq\varnothing$. Hence, set $D$ contains at least one agent $a$. Then, $(\chi,D,t)=\delta(a)=d_0$ by the definition of profile $\delta$. Thus, $d_0=(\chi,D,t)$, which is a contradiction. 
\end{proof}

\begin{lemma}\label{N child exists lemma}
For each play $(\delta,\omega)\in P$ and each formula $\neg\N\phi\in \omega$,  there is a play $(\delta',\omega')\in P$ such that $\neg\phi\in \omega'$.
\end{lemma}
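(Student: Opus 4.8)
The plan is to build the required outcome $\omega'$ as a maximal consistent extension of a suitable set of formulae, and then to obtain the action profile $\delta'$ for free from Lemma~\ref{delta exists lemma}. Concretely, I would consider the set
$$
X=\{\neg\phi\}\cup\{\psi\mid \N\psi\in\omega_0\}
$$
and first argue that $X$ is consistent. Suppose not; then there are formulae $\N\psi_1,\dots,\N\psi_m\in\omega_0$ with $\psi_1,\dots,\psi_m\vdash\phi$. By Lemma~\ref{super distributivity}, $\N\psi_1,\dots,\N\psi_m\vdash\N\phi$, so $\omega_0\vdash\N\phi$ and hence $\N\phi\in\omega_0$ by the maximality of $\omega_0$.

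The key step is to turn $\N\phi\in\omega_0$ into a contradiction with the assumption $\neg\N\phi\in\omega$. Here I would invoke Positive Introspection (Lemma~\ref{positive introspection lemma}): from $\N\phi\in\omega_0$ we get $\N\N\phi\in\omega_0$, and then Definition~\ref{canonical outcome}, applied to the formula $\N\phi$ and using $\omega\in\Omega$, yields $\N\phi\in\omega$, contradicting $\neg\N\phi\in\omega$ and the consistency of $\omega$. This establishes that $X$ is consistent. I would then let $\omega'$ be any maximal consistent extension of $X$; by construction $\neg\phi\in\omega'$, and since $\{\psi\mid\N\psi\in\omega_0\}\subseteq X\subseteq\omega'$ we have $\omega'\in\Omega$ by Definition~\ref{canonical outcome}.

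It remains to produce a complete action profile $\delta'$ with $(\delta',\omega')\in P$, but this is exactly the content of Lemma~\ref{delta exists lemma}: since $\omega'\in\Omega$, there is some $\delta'\in\Delta^{\mathcal{A}}$ with $(\delta',\omega')\in P$. The pair $(\delta',\omega')$ then witnesses the statement of the lemma.

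I do not expect a serious obstacle here; the only subtlety is the need for Positive Introspection rather than Definition~\ref{canonical outcome} alone, since the hypothesis $\neg\N\phi\in\omega$ is itself an $\N$-statement and we must push $\N\phi$ — not merely $\phi$ — from $\omega_0$ into $\omega$. Everything else is the standard Lindenbaum-style construction already carried out in the proof of Lemma~\ref{B child exists lemma}.
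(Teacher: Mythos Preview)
Your proposal is correct and follows essentially the same route as the paper: define $X=\{\neg\phi\}\cup\{\psi\mid\N\psi\in\omega_0\}$, derive $\N\N\phi\in\omega_0$ via Lemma~\ref{super distributivity} and Positive Introspection to reach a contradiction, extend $X$ to a maximal consistent $\omega'\in\Omega$, and apply Lemma~\ref{delta exists lemma}. Your write-up is in fact slightly more careful than the paper's in explicitly noting that $\omega'\in\Omega$ before invoking Lemma~\ref{delta exists lemma}.
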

\begin{proof}
Let $X$ be the set $\{\neg\phi\}\;\cup\;\{\psi\;|\;\N\psi\in \omega_0\}$. Next, we prove the consistency of set $X$. Assume the opposite. Hence, there are formulae  $\N\psi_1,\dots,\N\psi_n\in \omega_0$
where
$
\psi_1,\dots,\psi_n\vdash\phi.
$
Thus, 
$
\N\psi_1,\dots,\N\psi_n\vdash\N\phi
$
due to Lemma~\ref{super distributivity}.
Then, $\omega_0\vdash\N\phi$ because $\N\psi_1,\dots,\N\psi_n\in \omega_0$. Thus,
$\omega_0\vdash\N\N\phi$ by Lemma~\ref{positive introspection lemma}.
Hence, it follows from assumption $\omega\in\Omega$ and Definition~\ref{canonical outcome} that $\N\phi\in\omega$. Thus, by the consistency of set $\omega$ $\neg\N\phi\notin\omega$, which contradicts the assumption of the lemma. Therefore, set $X$ is consistent.

Consider any maximal consistent extension $\omega'$ of set $X$. Observe that $\neg\phi\in X\subseteq\omega'$ due to the definition of set $X$. Finally, by Lemma~\ref{delta exists lemma}, there is a profile $\delta'$ where $(\delta',\omega')\in P$.
\end{proof}

\begin{lemma}\label{induction lemma}
$(\delta,\omega)\Vdash\phi$ iff $\phi\in\omega$ for any play $(\delta,\omega)\in P$ and any formula $\phi\in\Phi$.
\end{lemma}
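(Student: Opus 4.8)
The plan is to prove the statement by induction on the structural complexity of $\phi$. The base case $\phi=p\in{\sf Prop}$ is immediate from Definition~\ref{canonical pi}, and the cases $\phi=\neg\psi$ and $\phi=\psi\to\chi$ follow from the induction hypothesis together with the maximality and consistency of $\omega$ in the usual way. So the real work is in the two modal cases.

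For $\phi=\N\psi$: in the $(\Leftarrow)$ direction, assuming $\N\psi\in\omega$, I would first note that $\N\psi\in\omega_0$ --- otherwise $\neg\N\psi\in\omega_0$, so $\N\neg\N\psi\in\omega_0$ by the Negative Introspection axiom, hence $\neg\N\psi\in\omega$ by Definition~\ref{canonical outcome}, contradicting consistency of $\omega$. Then for every play $(\delta',\omega')\in P$ we have $\omega'\in\Omega$, so $\psi\in\omega'$ by Definition~\ref{canonical outcome}, hence $(\delta',\omega')\Vdash\psi$ by the induction hypothesis; thus $(\delta,\omega)\Vdash\N\psi$. In the $(\Rightarrow)$ direction, if $\N\psi\notin\omega$ then $\neg\N\psi\in\omega$, and Lemma~\ref{N child exists lemma} yields a play $(\delta',\omega')\in P$ with $\neg\psi\in\omega'$, so $(\delta',\omega')\nVdash\psi$ by the induction hypothesis and $(\delta,\omega)\nVdash\N\psi$.

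For $\phi=\B^s_C\psi$: in the $(\Leftarrow)$ direction, assume $\B^s_C\psi\in\omega$. The Truth axiom gives $\psi\in\omega$, hence $(\delta,\omega)\Vdash\psi$ by the induction hypothesis. Also $C\neq\varnothing$, since $\neg\B^s_\varnothing\psi$ is an instance of the None to Blame axiom. I would then take the witness profile $\gamma\in\Delta^C$ defined by $\gamma(a)=(\psi,C,s)$ for every $a\in C$; by Definition~\ref{canonical cost definition}, $\|\gamma\|=\sum_{a\in C}s/|C|=s$. It remains to show $(\delta',\omega')\nVdash\psi$ for every $(\delta',\omega')\in P$ with $\gamma=_C\delta'$. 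From $\B^s_C\psi\in\omega$ and consistency of $\omega$ one gets $\N\neg\B^s_C\psi\notin\omega_0$ (else $\neg\B^s_C\psi\in\omega$ by Definition~\ref{canonical outcome}), i.e.\ $\cN\B^s_C\psi\in\omega_0$; since $\delta'(a)=\gamma(a)=(\psi,C,s)$ for all $a\in C$, Definition~\ref{canonical play} yields $\neg\psi\in\omega'$, hence $\psi\notin\omega'$ and $(\delta',\omega')\nVdash\psi$ by the induction hypothesis. This establishes $(\delta,\omega)\Vdash\B^s_C\psi$. In the $(\Rightarrow)$ direction, suppose $(\delta,\omega)\Vdash\B^s_C\psi$ but $\B^s_C\psi\notin\omega$. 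Then $(\delta,\omega)\Vdash\psi$, so $\psi\in\omega$ by the induction hypothesis, while $\neg\B^s_C\psi\in\omega$; by propositional reasoning $\neg(\psi\to\B^s_C\psi)\in\omega$. Let $\gamma\in\Delta^C$ be the profile witnessing $(\delta,\omega)\Vdash\B^s_C\psi$, so $\|\gamma\|\le s$; Lemma~\ref{B child exists lemma} provides a play $(\delta',\omega')\in P$ with $\gamma=_C\delta'$ and $\psi\in\omega'$, hence $(\delta',\omega')\Vdash\psi$ by the induction hypothesis --- contradicting the defining property of $\gamma$.

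I expect the $\B^s_C$ case to be the main obstacle, and within it the $(\Leftarrow)$ direction: one has to guess the right ``voting'' profile from the shape of the formula and then see that it is blocked by the canonical play construction, which hinges on pushing $\B^s_C\psi\in\omega$ up to $\cN\B^s_C\psi\in\omega_0$ and matching the triple $(\psi,C,s)$ exactly against the side condition of Definition~\ref{canonical play}; the edge case $C=\varnothing$ and the exact cost bookkeeping $\|\gamma\|=s$ also need care. The $\N$ case is routine S5 reasoning once the auxiliary Lemma~\ref{N child exists lemma} is in hand, and the Boolean and atomic cases are standard.
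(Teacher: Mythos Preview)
Your proposal is correct and follows essentially the same approach as the paper's proof: structural induction, with Lemma~\ref{N child exists lemma} and Lemma~\ref{B child exists lemma} supplying the witnesses in the $(\Rightarrow)$ directions of the modal cases, and the canonical voting profile $\gamma(a)=(\psi,C,s)$ together with the passage $\B^s_C\psi\in\omega \Rightarrow \cN\B^s_C\psi\in\omega_0$ handling the $(\Leftarrow)$ direction for $\B^s_C\psi$. The only cosmetic difference is that you eliminate $C=\varnothing$ up front via the None to Blame axiom, whereas the paper keeps that case and treats it separately when bounding $\|\gamma\|$; your $(\Rightarrow)$ argument for $\B^s_C\psi$ is phrased as a contradiction using a single witness $\gamma$ rather than the paper's universal contrapositive, but these are logically equivalent.
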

\begin{proof}
The lemma will be shown by induction on structural complexity of formula $\phi$. If $\phi$ is a propositional variable, then the statement of the lemma follows from Definition~\ref{sat} and Definition~\ref{canonical pi}. The cases when $\phi$ is a negation or an implication, as usual, can be proven from the maximality and the consistency of set $\omega$.

Let formula $\phi$ have the form $\N\psi$.

\noindent $(\Rightarrow):$ Suppose $\N\psi\notin\omega$. Then, $\neg\N\psi\in\omega$ by the maximality of set $\omega$. Thus, there is a play $(\delta',\omega')\in P$ such that $\neg\psi\in \omega'$, by Lemma~\ref{N child exists lemma}. Hence, $\psi\notin \omega'$ because set $\omega'$ is consistent. Then, by the induction hypothesis, $(\delta',\omega')\nVdash\psi$. Therefore, $(\delta,\omega)\nVdash\N\psi$ by Definition~\ref{sat}.

\vspace{1mm}

\noindent $(\Leftarrow):$ Suppose $\N\psi\in\omega$. Then, $\neg\N\psi\notin\omega$ because set $\omega$ is consistent. Thus, $\N\neg\N\psi\notin\omega_0$ by Definition~\ref{canonical outcome}. Hence, $\omega_0\nvdash \N\neg\N\psi$ due to the maximality of set $\omega_0$. Then, by the Negative Introspection axiom, $\omega_0\nvdash\neg\N\psi$. Thus, $\N\psi\in \omega_0$ by the maximality of set $\omega_0$. Hence, $\psi\in \omega'$ for each outcome $\omega'\in\Omega$ by Definition~\ref{canonical outcome}. Hence, by the induction hypothesis, $(\delta',\omega')\Vdash\psi$ for all plays $(\delta',\omega')\in P$. Then, $(\delta,\omega)\Vdash\N\psi$ by Definition~\ref{sat}.

Let formula $\phi$ have the form $\B^s_C\psi$. 

\noindent $(\Rightarrow):$ Suppose 
that $\B^s_C\psi\notin \omega$. 

\noindent{\bf Case I:} $\psi\notin \omega$. Thus, $(\delta,\omega)\nVdash\psi$ by the induction hypothesis. Therefore, $(\delta,\omega)\nVdash\B^s_C\psi$ by Definition~\ref{sat}. 

\noindent{\bf Case II:}
$\psi\in \omega$. First we prove that $\psi\to\B^s_C\psi\notin \omega$. Suppose $\psi\to\B^s_C\psi\in \omega$. Thus, $\omega\vdash \B^s_C\psi$ by the Modus Ponens inference rule. Hence, by the  maximality of set $\omega$, we have $\B^s_C\psi\in \omega$, which contradicts the assumption $\B^s_C\psi\notin \omega$.

Since $\omega$ is a maximal set, statement $\psi\to\B^s_C\psi\notin \omega$ implies that $\neg(\psi\to\B^s_C\psi)\in \omega$. Hence, by Lemma~\ref{B child exists lemma}, for any action profile $\gamma\in \Delta^C$, if $\|\gamma\|\le s$, then there is a play $(\delta',\omega')$ where $\gamma=_C\delta'$ and $\psi\in \omega'$. Thus, by the induction hypothesis, for each profile $\gamma\in \Delta^C$, if $\|\gamma\|\le s$, then there is a play $(\delta',\omega')\in P$ such that $\gamma=_C\delta'$ and $(\delta',\omega')\Vdash \psi$. Therefore, $(\delta,\omega)\nVdash\B^s_C\psi$ by Definition~\ref{sat}.

\vspace{1mm}
\noindent $(\Leftarrow):$ Suppose that $\B^s_C\psi\in \omega$. Hence, $\omega\vdash\psi$ by the Truth axiom. Then, $\psi\in\omega$ by the maximality of the set $\omega$. Thus, $(\delta,\omega)\Vdash\psi$ by the induction hypothesis.

Define $\gamma\in \Delta^C$ to be an action profile such that $\gamma(a)=(\psi,C,s)$ for each agent $a\in C$.
\begin{claim}\label{gamma le s claim}
$\|\gamma\|\le s$.
\end{claim}
\begin{proof-of-claim}
If set $C$ is not empty, then, by Definition~\ref{cost of profile definition} and Definition~\ref{canonical cost definition},
$$
\|\gamma\|=\sum_{a\in C}\|(\psi,C,s)\|=
\sum_{a\in C}\dfrac{s}{|C|}=s.
$$

If set $C$ is empty, then $\|\gamma\|=0$ by Definition~\ref{cost of profile definition}. At the same time, $s\ge 0$ by Definition~\ref{Phi}. Therefore, $\|\gamma\|\le s$.
\end{proof-of-claim}

Consider any play $(\delta',\omega')\in P$ such that $\gamma=_C\delta'$. By Definition~\ref{sat} and Claim~\ref{gamma le s claim}, it suffices to show that  $(\delta',\omega')\nVdash \psi$.

Statement $\B^s_C\psi\in \omega$ implies that $\neg\B^s_C\psi\notin \omega$ because set $\omega$ is consistent. Thus, $\N\neg\B^s_C\psi\notin \omega_0$ by Definition~\ref{canonical outcome} and because $\omega\in\Omega$. Hence, $\neg\N\neg\B^s_C\psi\in \omega_0$ due to the maximality of the set $\omega_0$. Thus, $\cN\B^s_C\psi\in \omega_0$ by the definition of modality $\cN$.
Also, $\delta'(a)=\gamma(a)=(\psi,C,s)$ for each $a\in C$. Hence, $\neg\psi\in\omega'$ by Definition~\ref{canonical play} and the assumption $(\delta',\omega')\in P$. Then, $\psi\notin\omega'$ by the consistency of set $\omega'$. Therefore, $(\delta',\omega')\nVdash \psi$ by the induction hypothesis.
\end{proof}

Finally, we are prepared to state and prove the strong completeness of our logical system.
\begin{theorem}\label{completeness theorem}
If $X\nvdash\phi$, then there is a game and a play $(\delta,\omega)$ of the game where $(\delta,\omega)\Vdash\chi$ for all $\chi\in X$ and $(\delta,\omega)\nVdash\phi$.
\end{theorem}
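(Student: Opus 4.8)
The plan is to run the standard Lindenbaum-plus-canonical-model argument, since essentially all the machinery has already been assembled in the preceding lemmas. First I would observe that the hypothesis $X\nvdash\phi$ makes the set $X\cup\{\neg\phi\}$ consistent: if it were inconsistent, then by the deduction lemma and propositional reasoning we would get $X\vdash\phi$. So I would fix a maximal consistent set $\omega_0\supseteq X\cup\{\neg\phi\}$ (Lindenbaum), and take the canonical game $G(\omega_0)=(\Delta,\|\cdot\|,d_0,\Omega,P,\pi)$ defined in Definitions~\ref{canonical Delta}--\ref{canonical pi}.

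Next I would check that $\omega_0$ is itself an outcome of this game, i.e.\ $\omega_0\in\Omega$. By Definition~\ref{canonical outcome} it suffices to verify that $\N\psi\in\omega_0$ implies $\psi\in\omega_0$, and this is immediate from the Truth axiom $\N\psi\to\psi$ together with the maximality of $\omega_0$. Then Lemma~\ref{delta exists lemma} applies and yields a complete action profile $\delta\in\Delta^{\mathcal A}$ with $(\delta,\omega_0)\in P$, so we have an actual play of an actual game to work with.

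Finally I would invoke the truth lemma, Lemma~\ref{induction lemma}, which says $(\delta,\omega_0)\Vdash\chi$ iff $\chi\in\omega_0$ for every formula $\chi$. Since $X\subseteq\omega_0$, this gives $(\delta,\omega_0)\Vdash\chi$ for all $\chi\in X$. And since $\neg\phi\in\omega_0$, the consistency of $\omega_0$ gives $\phi\notin\omega_0$, hence $(\delta,\omega_0)\nVdash\phi$ by the same lemma. That is exactly the conclusion of the theorem.

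There is no real obstacle left at this stage: the genuine difficulty of the completeness proof lies in Lemma~\ref{induction lemma} (and its supporting Lemmas~\ref{B child exists lemma}--\ref{N child exists lemma}, which in turn rely on the syntactic Lemma~\ref{five plus plus}), all of which are already established. The only points requiring any care in the theorem itself are the two routine observations above: that consistency of $X\cup\{\neg\phi\}$ follows from $X\nvdash\phi$, and that $\omega_0\in\Omega$ so that a play through $\omega_0$ exists.
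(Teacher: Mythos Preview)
Your proposal is correct and follows essentially the same route as the paper's proof: extend $X\cup\{\neg\phi\}$ to a maximal consistent set $\omega_0$, build the canonical game $G(\omega_0)$, observe via the Truth axiom that $\omega_0\in\Omega$, invoke Lemma~\ref{delta exists lemma} to obtain a play $(\delta,\omega_0)\in P$, and then apply the truth lemma (Lemma~\ref{induction lemma}). The only cosmetic difference is that the paper concludes $(\delta,\omega_0)\nVdash\phi$ by first getting $(\delta,\omega_0)\Vdash\neg\phi$ from the truth lemma and then using item~2 of Definition~\ref{sat}, whereas you go through $\phi\notin\omega_0$ directly; both are equivalent.
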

\begin{proof}
Assume that $X\nvdash\phi$. Hence, set $X\cup\{\neg\phi\}$ is consistent. Choose $\omega_0$ to be any maximal consistent extension of set $X\cup\{\neg\phi\}$ and $G(\omega_0)=(\Delta,\|\cdot\|,d_0,\Omega,P,\pi)$ to be the canonical game defined above. Then, $\omega_0\in \Omega$ by Definition~\ref{canonical outcome} and the Truth axiom. 

By Lemma~\ref{delta exists lemma}, there exists an action profile $\delta\in \Delta^\mathcal{A}$ such that $(\delta,\omega_0)\in P$. Hence, $(\delta,\omega_0)\Vdash\chi$ for all $\chi\in X$ and $(\delta,\omega_0)\Vdash\neg\phi$ by Lemma~\ref{induction lemma} and the choice of set $\omega_0$. Therefore,  $(\delta,\omega_0)\nVdash\phi$ by Definition~\ref{sat}.
\end{proof}

\section{Conclusion}\label{conclusion section}

In this paper we combine the ideas from the logics of resource bounded coalitions~\cite{alnr11jlc} and blameworthiness~\cite{nt19aaai} into a logical system that captures the properties of a degree of blameworthiness. Following~\cite{hk18aaai}, the degree of blameworthiness is defined as the cost of sacrifice. The main technical result is the completeness theorem for our system.
\label{end of paper}

\bibliographystyle{named}
\bibliography{sp}

\end{CJK}
\end{document}